\newcommand{\X}{\mathbf{x}}
\newcommand{\E}{\mathop{\mathbb{E}}}
\newcommand{\R}{\mathbb{R}}
\newcommand{\M}{\mathbf{m}}
\newcommand{\Q}{\mathcal{Q}}
\newcommand{\name}{\texttt{DiET}}
\title{Discriminative Feature Attributions: Bridging Post Hoc Explainability and Inherent Interpretability}
\author{%
  Usha Bhalla* \\
  Harvard University\\
  \texttt{usha\_bhalla@g.harvard.edu} \\
  \And
  Suraj Srinivas* \\
  Harvard University \\
  \texttt{ssrinivas@seas.harvard.edu} \\
  \And
  Himabindu Lakkaraju \\
  Harvard University \\
  \texttt{hlakkaraju@hbs.edu} \\
}
\begin{document}

\maketitle

\begin{abstract}

With the increased deployment of machine learning models in various real-world applications, researchers and practitioners alike have emphasized the need for explanations of model behaviour. To this end, two broad strategies have been outlined in prior literature to explain models. Post hoc explanation methods explain the behaviour of complex black-box models by identifying features critical to model predictions; however, prior work has shown that these explanations may not be faithful, in that they incorrectly attribute high importance to features that are unimportant or non-discriminative for the underlying task. Inherently interpretable models, on the other hand, circumvent these issues by explicitly encoding explanations into model architecture, meaning their explanations are naturally faithful, but they often exhibit poor predictive performance due to their limited expressive power. In this work, we identify a key reason for the lack of faithfulness of feature attributions: the lack of robustness of the underlying black-box models, especially to the erasure of unimportant distractor features in the input. To address this issue, we propose \emph{Distractor Erasure Tuning} (\name), a method that adapts black-box models to be robust to distractor erasure, thus providing discriminative and faithful attributions. This strategy naturally combines the ease of use of post hoc explanations with the faithfulness of inherently interpretable models. 
We perform extensive experiments on semi-synthetic and real-world datasets, and show that \name~produces models that (1) closely approximate the original black-box models they are intended to explain, and (2) yield explanations that match approximate ground truths available by construction. Our code is made public \href{https://github.com/AI4LIFE-GROUP/DiET}{here}.

\end{abstract}

\section{Introduction}

An important desideratum of machine learning models is for their predictions to be explainable. 
This allows both human domain experts as well as laypeople to better understand and trust the decisions made by models, and furthermore, is also a regulatory requirement for high-stakes settings. 
For example, both the European General Data Protection Regulation (GDPR) \cite{eugdpr} and the US AI Bill of Rights \cite{aibillofrights}, require organizations to provide explanations for decisions made in high-stakes settings. A common approach to producing such explanations from black-box models in a post hoc manner is via feature attribution, which aims to identify important input features influencing a model prediction. 
These methods typically work by locally approximating non-linear models with linear ones \cite{han2022explanation} under some input perturbations such as feature erasure. 
Intuitively, if the underlying model is more sensitive to the erasure of feature A than feature B, these methods aim to attribute a higher ``importance'' to feature A than feature B. 
A fundamental prerequisite for a feature to be considered important by a model is that it must first be useful in predicting the label, that is, it must be discriminative for the task.
If a feature does not contain information relating to the output label, then it cannot be used to predict the label, and thus feature attribution methods must not consider them important.
However, recent works \cite{shah2021input, hooker2019benchmark} have found this not to be case -- feature attribution methods often highlight non-discriminative features. 

This motivates a natural question: what causes feature attributions to highlight such non-discriminative features, making them unfaithful? 

\begin{figure}[h!]
     \centering
     
         \includegraphics[width=0.8\textwidth]{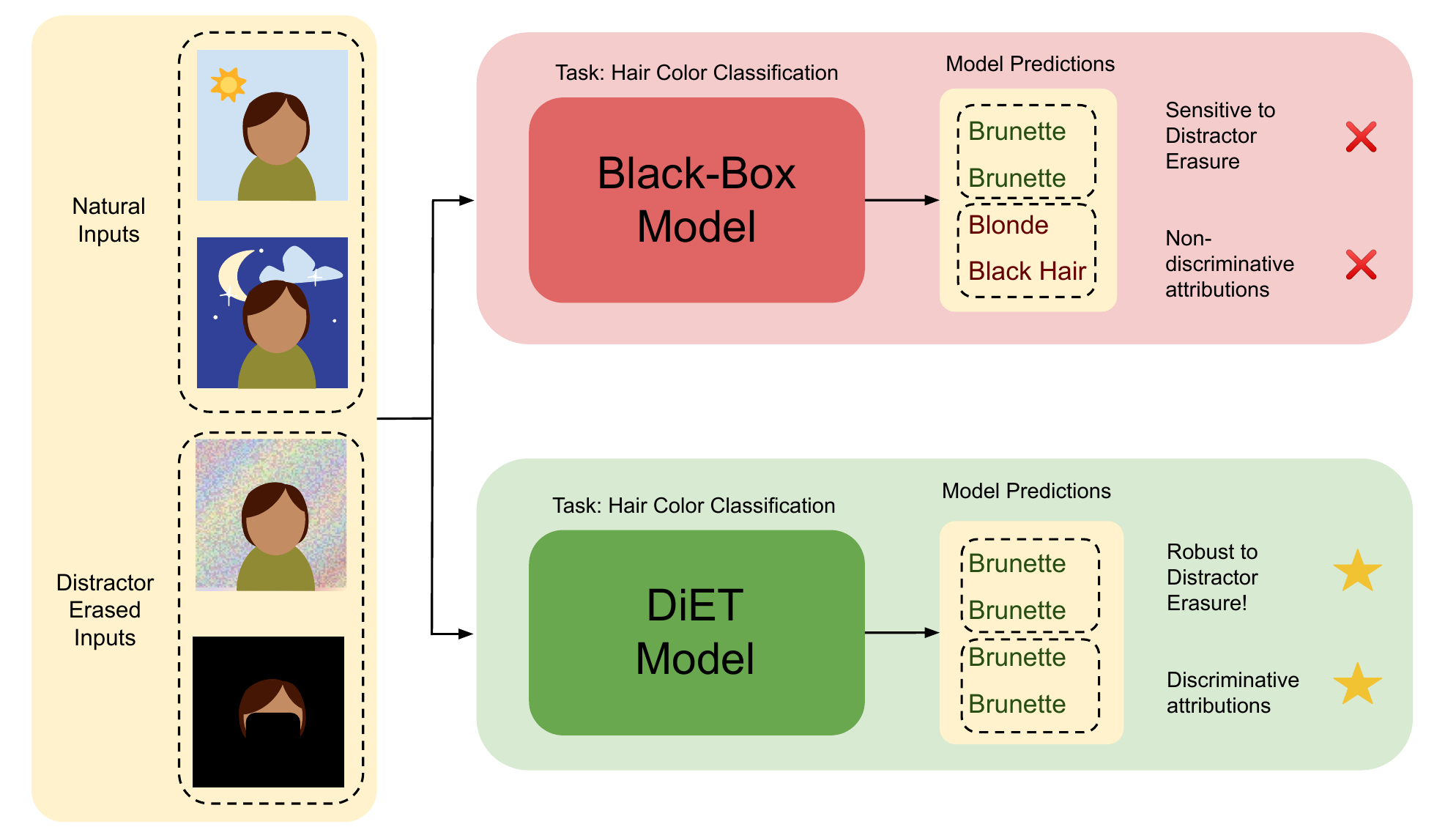}

        \caption{Illustration of our method, Distractor Erasure Tuning. \name~models exhibit robustness to distractor erasure (non-discriminative features such as backgrounds), allowing for the recovery of discriminative attributions. }
        \label{fig:teaser}
\end{figure}

Answering this question has been hard because of a lack of theoretical understanding of the faithfulness of feature attributions. While these notions have been used empirically \cite{shah2021input, hooker2019benchmark} to assess the quality of attributions, the theoretical characterization of optimally faithful feature attributions is missing in the literature. In this work, we tackle this problem by proposing a framework for feature attribution methods emphasizing faithfulness and particularly discriminability, formalized via the \textit{signal-distractor decomposition} for datasets. Essentially, the signal denotes the discriminative parts of the input (relative to a given task), while the distractor denotes the unimportant parts. Feature attribution methods are then evaluated on how well they are able to recover the signal, thus also providing a well-defined notion of a ``ground-truth''.
We theoretically identify an important criterion to recover this ground-truth, that being the robustness of the model to the erasure of the input distractors.  
To enable black-box models to recover such ground truth attributions, we propose \emph{Distractor Erasure Tuning} (\name), a method that adapts models to be robust to the erasure of input distractors. Given that these distractor regions are not known in advance, our method works by alternating feature attribution and model adaptation. At a high level, our work still uses feature attribution methods while adapting black-box models to have faithful attributions. Thus, this strategy naturally combines the ease of use of post hoc explanation methods with the faithfulness benefits of inherently interpretable models by providing the best of both alternatives. Our contributions are the following:

\begin{enumerate}

    \item We present a formalism for feature attribution that emphasizes discriminability and allows for a notion of well-defined ground truth, via the signal-distractor decomposition of a dataset. We show that it is necessary for models to be robust to distractor erasure for them to be able to recover this ground truth.

    \item We propose \emph{distractor erasure tuning} (\name), a novel method that adapts black-box models to make them robust to distractor erasure.
    
    \item We perform experiments on semi-synthetic and real-world computer vision datasets and show that \name~ outputs models that are faithful to its original input models, and that their feature attributions are interpretable and match ground-truth feature attributions.
\end{enumerate}

\section{Related Work}
\paragraph*{Post-Hoc Explainability.}
Post-hoc explainability methods aim to explain the outputs of fully trained black-box models either on an instance-level or global level. The most common post-hoc methods are feature attribution methods that rank the relative importance of features, either by explicitly producing perturbations \cite{ribeiro2016should, lundberg2017shap}, or by computing variations of input gradients \cite{smilkov2017smoothgrad, srinivas2019full, selvaraju2017grad}. Perturbation-based methods are especially popular in computer vision literature \cite{zeiler2014visualizing, fong2017interpretable, fong2019understanding, dabkowski2017real}, which use feature removal strategies adapted specifically for image data. However, these methods all assume a specific form for feature removal, and we show theoretically in Section \ref{sec:theory} that this can lead to unverifiable attributions.

\paragraph*{Inherently Interpretable Models.}
Inherently interpretable models are constructed such that we know exactly what they do, either through their weights or explicit modular reasoning. As such, the explanations provided by these models are more accurate than those given by post-hoc methods; however, the performance of interpretable models often suffers when compared to unconstrained black-box architectures. The most common inherently interpretable model classes include linear models, decision trees and rules with limited depth, GLMs, GAMs \cite{hastie2017generalized}, JAMs \cite{chen2018learning, yoon2019invase, jethani2021have}, prototype- and concept-based models \cite{chen2019looks, koh2020concept}, and weight-input aligned models \cite{bohle2022b}. While \cite{chen2019looks, koh2020concept} leverage the expressivity of deep networks, they constrain hypothesis classes significantly and still often suffer from a decrease in performance. Among these, our work most closely relates to JAMs, which amortise feature attribution generation using a learnt masking function to generate attributions in a single forward pass, and trains black-box models using input dropout. On other hand, JAMs (1) trains models from scratch, whereas \name~can interpret black-box models, (2) amortises feature attributions using a masking function resulting in less accurate attributions, (3) trains models to be robust to a large set of candidate masks via input dropout, leading to low predictive accuracy, whereas \name~trains models only to be robust to the optimal mask, leading to more flexibility and higher predictive accuracy.

\paragraph*{Evaluating Correctness of Explanations.}
As explainability methods grow in number, so does the need for rigorous evaluation of each method. Research has shown that humans naively trust explanations regardless of their ``correctness'' \cite{lakkaraju2020fool}, especially when explanations confirm biases or look visually appealing. Common approaches to evaluate explanation correctness rely on feature / pixel perturbation \cite{samek2016evaluating, srinivas2019full, agarwal2022openxai}, i.e., an explanation is correct if perturbing unimportant feature results in no change of model outputs, whereas perturbing important features results in large model output change. \citet{hooker2019benchmark} proposed remove-and-retrain (ROAR) for evaluating feature attribution methods by training surrogate models on subsets of features denoted un/important by an attribution method and found that most gradient-based methods are no better than random. While prior works focused on developing metrics to evaluate correctness of explanations, our method \name~produces models that have explanations that are accurate by design, according to pixel-perturbation methods.

\section{Theory of Discriminative Feature Attributions}\label{sec:theory}

\newtheorem{defn}{Definition}
\newtheorem{theorem}{Theorem}
\newtheorem{lemma}{Lemma}
\newtheorem*{corr}{Corollary}
\newtheorem*{assumption}{Assumption}
\newenvironment{hproof}{%
  \renewcommand{\proofname}{Proof Idea}\proof}{\endproof}

In this section, we provide a theoretical framework for feature attribution, including a well-defined ground truth. We start by identifying a common feature of feature attributions, their reliance on perturbations or erasure. Intuitively, feature attribution methods work by simulating removing certain features and estimating how the model behaves when those features are removed: removing unimportant features should not change model behaviour. Typically, this erasure is implemented by replacing features with scalar values, such as the mean of the dataset \cite{zeiler2014visualizing, fong2017interpretable}. However, this can result in out-of-distribution inputs that can confuse a classifier, thus making it difficult to create meaningful attributions. 

To ground this argument in an example, consider a model that classifies cows and camels. For an image of a camel, a feature attribution might note that only the hump of the camel and the sand it stands on are important for classification.  As such, we would expect that the sky was irrelevant to the classifier's prediction, and we can concretely test this by altering it and creating a counterfactual sample. For example, we could mask the sky with an arbitrary uniform color; however, this may result in the sample being out-of-distribution for the model, and its prediction may change drastically even if the sky was not important for prediction. There are two strategies to overcome this problem. The first solution involves masking the sky in a manner that preserves the naturalness of the image, but this solution involves using large-scale generative models, which themselves can contain biases and be uninterpretable. The second solution requires the classifier to be invariant to the erasure of the pixels corresponding to the sky, which is our solution in this paper. We formalize this argument below by defining an erasure-based feature attribution method called $(\epsilon, \Q)$-feature attribution.

\paragraph{Notation.}\label{notations} Throughout this paper, we shall assume the task of classification with inputs $\X \sim \mathcal{X}$ with $\X \in \R^d$ and $y \in [1,2,... C]$ with $C$-classes. We consider the class of deep neural networks $f: \R^d \rightarrow \triangle^{C}$ which map inputs $\X$ onto a $C$-class probability simplex. This paper considers binary feature attributions, which are represented as binary masks $\M \in \{0,1\}^d$, where $\M_i=1$ indicates an important feature and $\M_i=0$ indicates an unimportant feature.

\subsection{Feature Attributions with Input Erasure}\label{q}
We first define an erasure-based feature attribution such that the feature replacement method is explicit, and features are replaced with samples from a counterfactual distribution $\Q$. Particularly, we are interested in binary attributions (i.e., a feature is considered important or not) instead of real-valued ones. 

\begin{defn} \label{defn:QFA} ($\epsilon, \Q$)-feature attribution (QFA) is a binary mask $\M(f, \X, \Q)$ that relies on a model $f(\cdot)$, an instance $\X$, and a $d$-dimensional counterfactual distribution $\Q$, and is given by
\begin{align*}
    \M(f, \X, \Q) &= \arg\min_{\M'} \| \M' \|_0 ~~\text{such that}~~\E_{q \sim \mathcal{Q}} \| f(\X_s(\M',q)) - f(\X) \|_1 \leq \epsilon
\end{align*}
where $\X_{s}(\M, q) = \M \odot \X + (1 - \M) \odot q$
\end{defn}

Thus, an ($\epsilon, \Q$)-feature attribution (henceforth, \emph{QFA}) refers to the sparsest mask that can be applied to an image such that the model's output remains approximately unchanged. QFA depends on the feature replacement distribution $\Q$, where $\Q$ is independent of both $\X$ and $y$. This generalizes the commonly used heuristics of replacing unimportant features with the dataset mean, in which case $\Q$ is a Dirac delta distribution at the mean value. The choice of $\Q$ is indeed critical, as an incorrect choice can hurt our ability to recover the correct attributions due to the resulting inputs being out-of-distribution and the classifier being sensitive to such changes. Specifically, an incorrect $\Q$ can result in QFA being less sparse, as masking even a few features with the wrong $\Q$ would likely cause large deviations in the model's outputs. As a result, given a model, we must aim to find the $\Q$ that leads to the sparsest QFA masks. However, the problem of searching over $\Q$ is complex, as it requires searching over the space of all $d$-dimensional distributions, and furthermore, if the underlying model is non-robust, there may not exist any $\Q$ that leads to sparse attributions. To avoid this, we consider the inverse problem: given $\Q$, we find the class of models that have the sparsest QFAs w.r.t. that particular $\Q$. We call this the $\Q$-robust model class, which we define below: 

\begin{defn} \label{defn:QVIM}
    $\Q$-robust model class $\mathcal{F}_v(\Q)$: For some given distribution $\Q$, the class of models $\mathcal{F}_v$ for which $\Q$ has the sparsest QFA mask as opposed to any other $\Q'$, such that for all $f \in \mathcal{F}_v(\Q)$, 
    \begin{align*}
        \Q = \arg \min_{\Q'} \E_{\X} \| \M(f, \X, \Q') \|_0
    \end{align*}
\end{defn}

Intuitively, $\Q$-robust models result in the sparsest QFA masks and can be thought of as being robust to the erasure of "irrelevant" input features. Recalling our example of the cows and camels, we would like models to be robust to the replacement of the pixels corresponding to the sky but not necessarily robust to pixels corresponding to the camel or the cow itself. This distinguishes it from classical robustness definitions, which require models to be robust to small perturbations (rather than erasure) at every feature uniformly. Thus $\Q$-robustness is equivalent to enforcing robustness to the erasure of distractor features, a notion that is central to this work. For the rest of this paper, we shall refer to QFA applied to a model from a $\Q$-robust model class as a "matched" feature attribution -- the same $\Q$ is used to both define the model class and the feature attribution.

\subsection{Recovering the Signal-Distractor Decomposition}

In the study of feature attribution, the `ground truth' attributions are often unspecified. Here, we show that for datasets that are signal-distractor decomposable, formally defined below, there exists a ground truth attribution, and feature attributions for optimal verifiable models are able to recover it. Intuitively, given an object classification dataset between cows and camels, the "signal" refers to the regions in the image that are discriminative, or correlated with the label, such as the cows or camels. The distractor refers to everything else, such as the background or sky. Note that if objects in the background are spuriously correlated with the label, i.e. sand or grass, those would be part of the signal, not the distractor. We first begin by formally defining the signal-distractor decomposition. 


\begin{defn}
    A labelled dataset $D = \{(\X, y)_{i=1}^{N}\}$ has a signal-distractor decomposition defined by masks $\M(\X) \in \{0,1\}^d$ for every input $\X$, where 
    \begin{enumerate}
        \item $\X \odot \M(\X)$ is the discriminative \underline{signal}, where $p(y \mid \X) = p(y \mid \X \odot \M(\X))$
        \item $\X \odot (1 - \M(\X))$ is the non-discriminative \underline{distractor}, where $p(y \mid \X \odot (1 - \M(\X))) = p(y)$
        \item $\M(\X)$ is the sparsest mask, i.e., $\M(\X) = \arg\min_{\M'(\X)} \| \M'(\X) \|_0$, such that (1) and (2) are satisfied.
    \end{enumerate} 
\end{defn}

We propose to use the masks $\M(\X)$ implied by the signal-distractor decomposition as ground truth feature attributions. These are meaningful as they precisely highlight the discriminative components of the image and ignore the non-discriminative regions. Discriminability has previously been considered an important criterion to evaluate feature attributions \cite{shah2021input, hooker2019benchmark} however, we here take a step further and propose its usage as ground truth. 

We observe first that the masks $\M(\X)$ of the signal-distractor decomposition always exist: setting $\M(\X)$ as an all-ones vector trivially satisfies conditions (1) and (2). When multiple masks exist, condition (3) requires us to choose the sparsest such mask $\M(\X)$. Using the definitions provided, we show below an asymptotic argument stating that $Q$-robustness is a necessary condition to recover the optimal masks defined by the signal-distractor decomposition. 

\textbf{Remark}: A dataset $\mathcal{D}$ is said to have a ``non-redundant signal'' when the sparsest mask in condition (3) of the signal-distractor decomposition is equal to the sparsest mask when (1) alone is satisfied.

\begin{theorem}
    QFA applied to $\Q$-robust models recover the ground-truth masks when applied to the Bayes optimal predictor $f_v^* \in \mathcal{F}_v(\Q)$, for datasets $\mathcal{D}$ with a non-redundant signal.
\end{theorem}

\begin{hproof}

We first note that the optimal $\Q$ for QFA is equal to the ground truth distractor distribution, as this leads to the sparsest QFA. If a $\Q$-robust model aims to recover the sparsest masks, then its QFA mask must equal that obtained by setting $\Q$ equal to the distractor. From the uniqueness argument in the definition of the signal-distractor decomposition, this is possible only when the optimal mask is recovered by QFA.
\end{hproof}

\begin{corr}
    QFA fails to recover the ground-truth masks when applied to predictors $f \not\in \mathcal{F}_v(\Q)$.
\end{corr}

This follows from the fact that for any $f \not\in \mathcal{F}_v(\Q)$, there exists some other $\Q'$ that results in a sparser mask, indicating that the ground truth masks are not recovered. Thus, this shows that feature attributions applied to the incorrect model class can be less effective - in this case, they fail to recover the ground truth masks. Further, the Bayes optimality is an important condition because it ensures that the resulting model is sensitive to all discriminative features in the input -- sub-optimal models are sub-optimal precisely because they fail to capture the signal from all the discriminative components of the input, and this can interfere with such models being able to recover ground truth masks. In practice, if we expect our models to be highly performant, we can expect them to be sensitive to all discriminative parts of the input and thus approximately recover ground truth masks. Finally, in practice, we do not have access to the ground truth masks for natural datasets, as the discriminative and the non-discriminative regions are not known in advance. In order to use these notions of ground truth, it is thus vital to construct semi-synthetic datasets where the discriminative parts are known. Thus, one can use semi-synthetic datasets to validate a feature attribution approach and then apply it to gain insight into real datasets with unknown signal and distractor components. 

To summarize, we have defined a feature attribution method with the feature removal process made explicit via the counterfactual distribution $\Q$. To minimize the sparsity of the attribution masks with a given $\Q$, we use models from the $\Q$-robust model class $\mathcal{F}_v(\Q)$. Finally, we find that feature attributions derived from Bayes optimal models in the model class $\mathcal{F}_v(\Q)$ are able to recover the ground-truth masks and fail to do so otherwise.

\section{\name: Distractor Erasure Tuning}
In the previous section we showed that given a $\Q$-robust model $f_v \in \mathcal{F}_v(\Q)$, we are able to apply QFA to recover the ground truth masks. In this section, we shall discuss how to practically build such robust models, given a pre-defined counterfactual distribution $\Q$ that defines the erasure method.

\textbf{Relaxing QFA.} We note that QFA as defined in definition \ref{defn:QFA} is difficult to optimize in its current form due to its use of $\ell_0$ regularization and its constrained form. To alleviate this problem, we perform two relaxations: first, we relax the $\ell_0$ objective into an $\ell_1$ objective, and second, we convert the constrained objective to an unconstrained one by using a Lagrangian. The resulting objective function is given in equation \ref{l_explain}. Assuming the model $f_v$ is known to us, we can minimize this objective function to obtain ($\epsilon, \Q$)-feature attributions for each point $\X \in \mathcal{X}$.

\begin{align}
    \mathcal{L}_\text{QFA}(\theta, \{\M(\X)\}_{\X \in \mathcal{X}}) = \E_{\X \in \mathcal{X}} \left[ \underbrace{\| \M(\X) \|_1}_\text{mask sparsity} 
            + \lambda_1 \underbrace{\| f_v(\X; \theta) - f_v (\X_\text{s}(\M, q)); \theta) \|_1}_\text{data distillation} \right] \label{l_explain}
\end{align}

\textbf{Enforcing Model Robustness via Distillation.} Assuming that the optimal masks denoting the signal-distractor decomposition are known w.r.t. every training data point (i.e., $\{\M(\X)\}_{\X \in \mathcal{X}}$), one can project any black-box model into a $\Q$-robust model via distillation. Specifically, we can use equation \ref{l_train} for this purpose, which contains (1) a data distillation term to enforce the $\epsilon$ constraint in QFA, and (2) a model distillation term to enforce that the resulting model and original model are approximately equal. Accordingly, the black-box model $f_b$ and our resulting model $f_v$ both have the same model architecture, and we initialize $f_v = f_b$. 

\begin{gather}
\mathcal{L}_\text{train}(\theta, \{\M(\X)\}_{\X \in \mathcal{X}} ) = 
            \E_{\X \in \mathcal{X}} \left[\underbrace{\| f_v(\X; \theta) - f_v (\X_\text{s}(\M(\X), q)); \theta) \|_1}_\text{data distillation}
            + \lambda_2 \underbrace{\| f_b (\X) - f_v (\X; \theta) \|_1}_\text{model distillation} \right] \label{l_train}
\end{gather}

\textbf{Alternating Minimization between $\theta$ and $\M$.} We are interested in both of the above objectives: we would like to recover the optimal masks from the dataset, as well as use those masks to enforce ($\epsilon, \Q$) constraints via distillation to yield our $\Q$-robust models. We can thus formulate the overall optimization problem as the sum of these terms, as shown in equation \ref{eqn:opt_prob}. Notice that both these objectives assume that either the optimal masks, or the robust model is known, and in practice, we know neither. A common strategy in cases that involve optimizing over multiple sets of variables is to employ alternating minimization \cite{JainK17}, which involves repeatedly fixing one of the variables and optimizing the other. We handle the constrained objective on the mask variables via projection, i.e., using hard-thresholding / rounding to yield binary masks. 

\begin{gather}
    \theta^*, \{\M^*(\X)\} = \arg \min_{\theta, \M} \left( \mathcal{L}_\text{train}(\theta, \{\M(\X)\}) + \mathcal{L}_\text{QFA}(\theta, \{\M(\X)\}) \right) \label{eqn:opt_prob}\\
    \text{such that} \quad\quad \M(\X) \in \{0,1\}^d \quad \forall \X \in \mathcal{X} \nonumber
\end{gather}

\textbf{Iterative Mask Rounding with Increasing Sparsity.} In practice, mask rounding makes gradient-based optimization unstable due to sudden jumps in the variables induced by rounding. This problem commonly arises when dealing with sparsity constraints. To alleviate this problem, use a heuristic that is common in the model pruning literature \cite{han2015learning} called iterative pruning, which involves introducing a rounding schedule, where the sparsity of the mask is gradually increased during optimization steps. Inspired by this choice, we employ a similar strategy over our mask variables.

\textbf{Practical Details.} We implement these objectives as follows. First, we initialize the robust model to be the same as the original model, $f_v = f_b$, and the mask to be all ones, $D_s = m \odot D_d, m = 1$. 
We then iteratively (1) simplify $D_s$ by optimizing $\mathcal{L}_{QFA}$ until $\M$ converges, (2) round $\M$ such that it is binary (i.e. $D_s$ is a subset of features in $D_d$ rather than a weighting of them), and (3) update $f_v$ by minimizing $\mathcal{L}_{train}$ such that $D_s$ is equally as informative as $D_d$ to $f_v$ and $f_v$ is functionally equivalent to $f_b$. As per Definition \ref{defn:QVIM}, we replace masked pixels in $D_s$ with a pre-determined counterfactual distribution $\Q$. This ensures that the given $\Q$ is the optimal counterfactual distribution for $f_v$, meaning $f_v$ comes from the $\Q$-robust model class $\mathcal{F}_v(\Q)$. The pseudocode is given in Algorithm \ref{alg:cap}.

\begin{algorithm}
\caption{Distractor Erasure Tuning}\label{alg:cap}
\begin{algorithmic}
\State \textbf{Input:} Dataset $D_d := (x, y)$, model $f_b$, hyperparameter $k$ rounding steps 
\State \textbf{Hyperparameters:} $k$ rounding steps, $u$ mask scaling factor, $s(t)$ sparsity at step $t$

\State $\{\M(\X)\}, ~~\text{s.t.}~~ \M(\X) \gets $ ones with shape $\R^{d/u}$ \Comment{Init mask $\M(\X)$ to ones}
\State $f_v \gets f_b$ \Comment{Init robust model $f_v$ to $f_b$}
\For{$k$ rounding steps}
    \While{$\mathcal{L}_\text{QFA}$ not converged}
        \State $\{\M(\X)\} \gets \{\M(\X)\} + \nabla_{\M} \mathcal{L}_\text{QFA}$
    \EndWhile

    \State $\M \gets$ round($\M$, $s(t)$) ~~~$\forall~ \M \in \{ \M(\X) \}$

    \While{$\mathcal{L}_\text{train}$ not converged}
        \State $f_v \gets f_v + \nabla_{\theta} \mathcal{L}_\text{train}$
    \EndWhile
    
\EndFor
\State \textbf{return} $\{\M(\X)\}, f_v$
\end{algorithmic}
\end{algorithm}

\textbf{Mask Scale.} In order to encourage greater ``human interpretability,'' we increase the pixel size of the masks by lowering their resolution. We do this by downscaling the masks before optimization. Concretely, we initialize the masks to be of size $m_d = x_d / u$, where $x_d$ is the dimension of the image $\X$ and $u$ is the pixel size we wish to consider. We then upsample the mask to be of dimension $x_d$ before applying it to $\X$. The more we downscale the mask by (i.e. the greater $u$ is), the more interpretable and visually cohesive the distilled dataset $\X_s$ is. 

\section{Experimental Evaluation}

In this section, we present our empirical evaluation in detail. We consider various quantitative and qualitative metrics to evaluate the correctness of feature attributions given by \name~ models as well as the faithfulness of \name~ models to the models they are meant to explain. We also evaluate \name~ models' ability to explain models manipulated to have arbitrary uninformative input gradients. Finally, we analyze the effect of the mask downscaling hyperparameter on attributions. Comparisons to additional baselines beyond those shown in Figure \ref{fig:qual_examples} are given in \ref{shapbcos}. 

\paragraph{Datasets.}\label{datasets}
\textbf{Hard MNIST:} The first is a harder variant of MNIST where the digit is randomly placed on a small subpatch of a colored background. Each sample also contains small distractor digits and random noise patches. For this dataset, we consider the signal to be all pixels contained within the large actual digit, and the distractor to be all pixels in the background, noise, and smaller digits. 
\textbf{Chest X-ray:} Second, we consider a semi-synthetic chest x-ray dataset for pneumonia classification \cite{kermany2018labeled}. To control exactly what information the model leverages such that we can create ground truth signal-distractor decompositions, we inject a spurious correlation into this dataset. We randomly place a small, barely perceptible noise patch on each image in the ``normal" class. We confirm that the model relies only on the spurious signal during classification by testing the model's drop in performance when flipping the correlation (adding the noise patches to the ``pneumonia'' class) and seeing that the accuracy goes from 100\% to 0\%. As such, for this dataset, the signal is simply the noise patch and the distractor is the rest of the xray. 
\textbf{CelebA:} The last dataset is a subset of CelebA \cite{liu2018large} for hair color classification with classes \{dark hair, blonde hair, gray hair\}. We correlate the dark hair class with glasses to allow for qualitative evaluation of each methods' ability to recover spurious correlations. This dataset does not have a ground truth signal distractor decomposition, as there are many unknown discriminative spurious correlations the model may rely upon.

\begin{figure}
\centering

\includegraphics[width=0.7\textwidth]{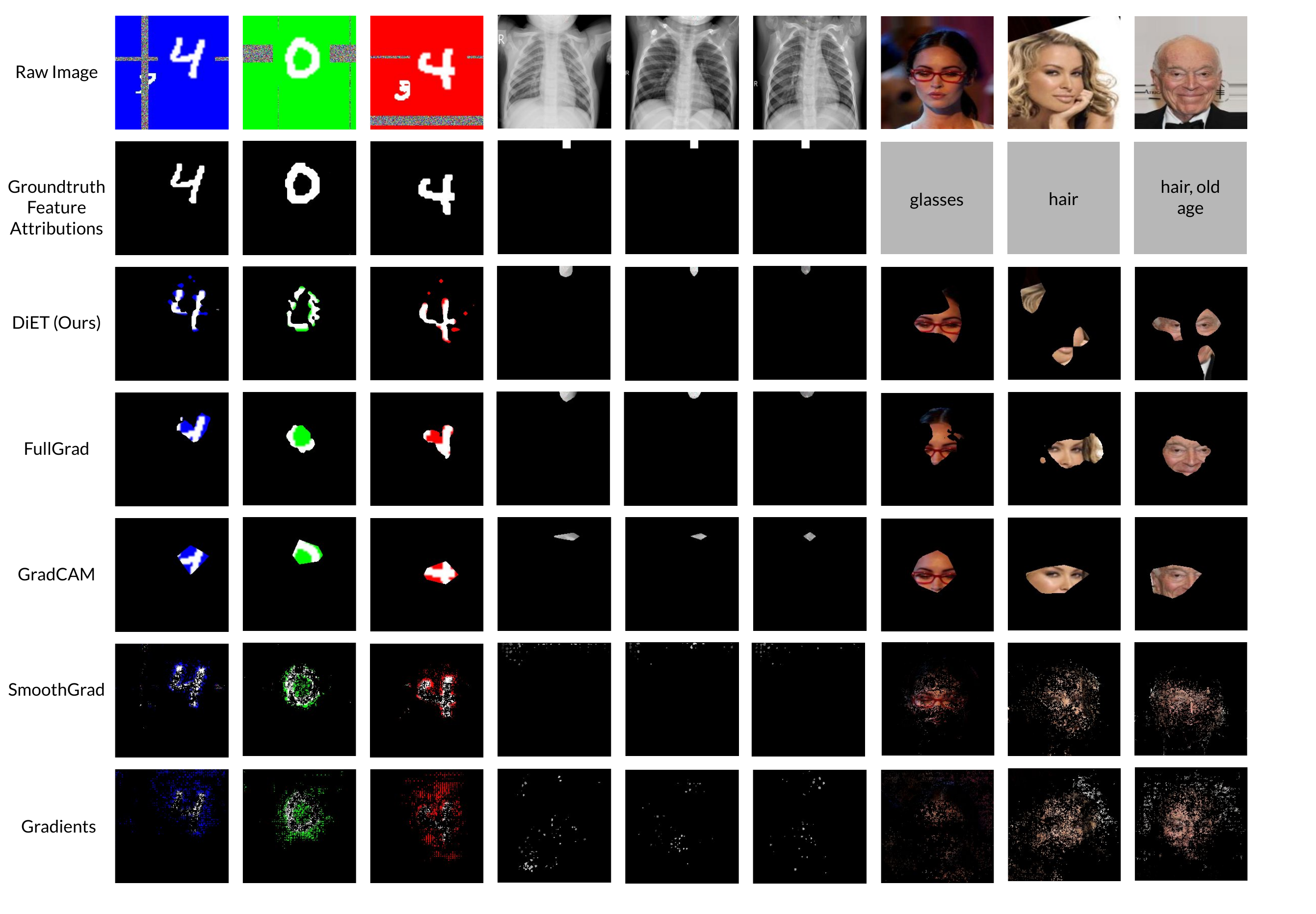}
        \caption{Visualization of datasets and attribution methods considered in this work. \textit{Row 1}: Raw data samples, \textit{Row 2}: Ground truth attributions, \textit{Row 3}: \name~ attributions, \textit{Rows 4-7}: Baseline methods. }
        \label{fig:qual_examples}

\end{figure}

\paragraph{Models.}
For all experiments, we use ImageNet pre-trained ResNet18s for our baseline models, $f_b$. All models achieve over 96\% test accuracy. We train \name~ models with $Q \sim \mathbbm{1}_{d*d}*\mathcal{N}(\mu(D_d), \sigma^2(D_d))$, meaning that each image is masked with a uniform color drawn from a normal distribution around the mean color of the dataset. For all evaluation, we use $Q \sim \mathbbm{1}_{d*d}*\mathcal{N}(\mu(D_d), 0)$ (the dirac delta of the dataset mean) to ensure that masked samples are minimally out-of-distribution for the baseline models $f_b$.

\subsection{Evaluating the Correctness of Feature Attributions}\label{correctness}

\paragraph*{Pixel Perturbation Tests.}
We test the faithfulness of our explanations via the pixel perturbation variant proposed in \cite{srinivas2019full, hooker2019benchmark}, where we mask the $k$ least salient pixels as determined by any given attribution method and check for degradation in model performance with the mean of the dataset. This metric evaluates whether the $k$ masked pixels were necessary for the model to make its prediction. As mentioned in previous works, masked samples come from a different distribution than the original samples, meaning poor performance after pixel perturbation can either be a product of the model's performance on the masks or the feature attribution scores being incorrect. To disentangle the correctness of the attributions from the robustness of the model, we perform pixel perturbation tests on ground truth feature attributions, with results reported in the appendix. Note that our method returns binary masks, but this metric requires continuous valued attributions to create rankings. As such, for this experiment we use the attributions created by our method \textit{before rounding}.

Results are shown in Figure \ref{fig:reg_perturb}. We find that the attributions produced by \name, used in conjunction with \name~models outperform all baselines. Furthermore, \name~attributions tested on the baseline model also generally perform better than gradient-based attributions.

\paragraph*{Intersection Over Union.} 
We further evaluate the correctness of our attributions by measuring their Intersection Over Union (IOU) with the ``ground truth'' attributions. We use the signal from the ground truth signal-distractor decomposition as described in \ref{datasets} for the ground truth attributions. For each image, if the ground truth attribution is comprised of $n$ pixels, we take the intersection over union of the top $n$ pixels returned by the explanation method and the $n$ ground truth pixels, meaning an IOU of 1 corresponds to a perfectly aligned/correct attribution. Results are shown in \ref{iou-table}, where our method performs the best for both datasets. We report mean and standard deviation over the dataset for each method.

\begin{figure}
     \centering
     \begin{subfigure}[b]{0.32\textwidth}
         \centering
         \includegraphics[width=\textwidth]{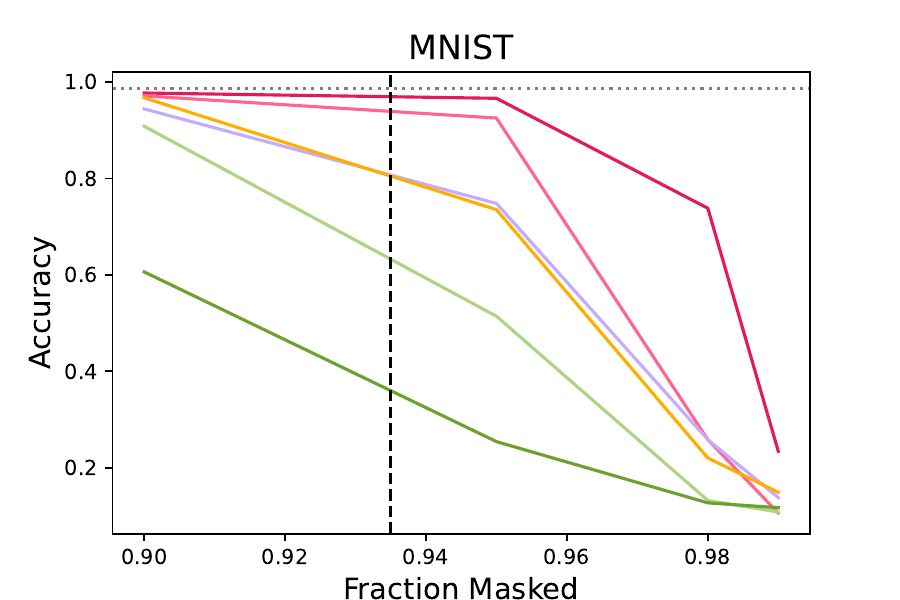}
         \label{fig:mnist_reg_perturb}
     \end{subfigure}
     \begin{subfigure}[b]{0.32\textwidth}
         \centering
         \includegraphics[width=\textwidth]{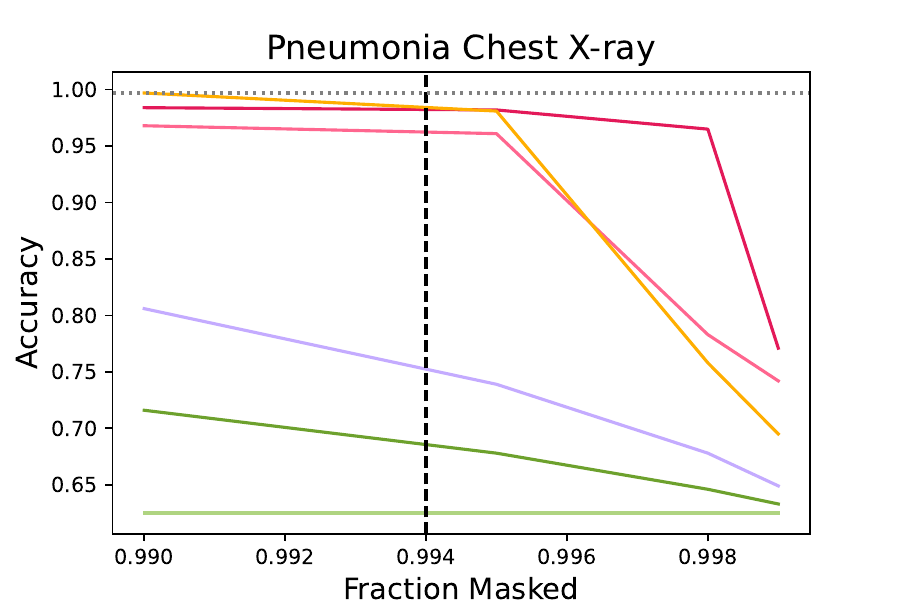}
         \label{fig:xray_reg_perturb}
     \end{subfigure}
     \begin{subfigure}[b]{0.32\textwidth}
         \centering
         \includegraphics[width=\textwidth]{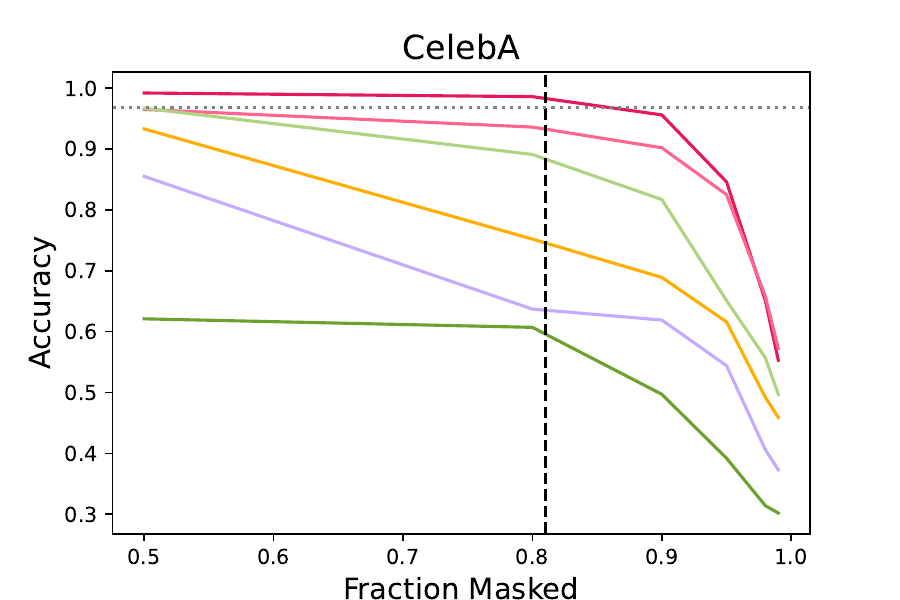}
         \label{fig:celeba_reg_perturb}
     \end{subfigure}
     \begin{subfigure}[b]{\textwidth}
         \centering
         \includegraphics[width=\textwidth]{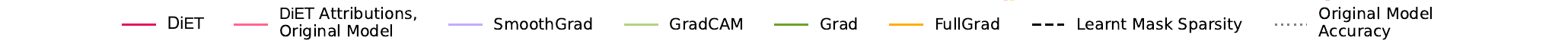}
         \label{fig:legend}
     \end{subfigure}
        \caption{Pixel perturbation tests (higher is better) for MNIST (left), Chest X-ray (middle), and CelebA (right) datasets. \name's recommended mask sparsity is shown as a vertical dashed line. We observe that \name~ performs the best overall. Refer to Section \ref{correctness} for details.}
        \label{fig:reg_perturb}
\end{figure}

\subsection{Evaluating the Faithfulness of \name~ Models}\label{model_faith}

To ensure that the \name~ model ($f_v$) returned by our method is faithful the the original model ($f_b$) it approximates, we test the accuracy of \name~ models with respect to the predictions produced by the original model. Specifically, we take the predictions of the original model $f_b$ to be the labels of the dataset. We evaluate \name~ models on both the original dataset ($f_v(D_d) \approx f_b(D_d)$) and the simplified dataset ($f_v(D_s) \approx f_b(D_d)$), with results shown in \ref{model-faithfulness}. We see that \name models are highly faithful to the baseline models they approximate.

\vspace*{-0.4cm}

\begin{table}[htbp]
    \centering
\small
    \begin{minipage}{0.49\textwidth}
    \centering

    \caption{Intersection Over Union Results}
\label{iou-table}
\begin{tblr}{
  cell{1}{2} = {c},
  cell{1}{3} = {c},
  hline{1-2,7} = {-}{},
}
 & Hard MNIST & Chest X-ray\\
\name~ (Ours) & \textbf{0.461 $ \pm0.08$} & \textbf{0.821$ \pm 0.05$}\\
SmoothGrad & 0.252$\pm 0.05$ & 0.045 $\pm0.05$\\
GradCAM & 0.295 $\pm0.09$ & 0.000 $\pm0.00$\\
Input Grad & 0.117 $\pm0.05$ & 0.017$\pm 0.03$\\
FullGrad & 0.389 $\pm0.10$ & 0.528 $\pm0.12$
\end{tblr}
    \end{minipage}
    \hfill
    \begin{minipage}{0.49\textwidth}
        \caption{Faithfulness of \name~ to Original Model}
\label{model-faithfulness}
\begin{tblr}{
  cell{1}{2} = {c},
  cell{1}{3} = {c},
  hline{1-2,4} = {-}{},
}
 & {Hard\\MNIST} & {Chest\\X-ray} & CelebA\\
{Faithfulness on \\Original Data} & 0.996 & 1.00 & 0.995\\
{Faithfulness on \\Simplified Data} & 0.987 & 1.00 & 0.975
\end{tblr}
\end{minipage}

\end{table}

\subsection{Qualitative Analysis of Simplified Datasets}

We first explore how well \name~ recovers signal-distractor decompositions. 
For CelebA, we see that all methods except for input gradients correctly recover the spurious correlation for the ``dark hair/glasses'' class, however only our method provides useful insights into the other two classes. We see that our method correctly identifies hair as the signal for the ``blonde hair'' class, whereas other methods simply look at the eyes, which are not discriminative. Furthermore, we see that for the ``gray hair'' class, our method picks up on hair, as well as initially unknown spurious correlations such as wrinkles and bowties. 
For Hard MNIST, we see that our method clearly isolates the signal and ignores the distractor. FullGrad and GradCAM suffer from a locality bias and tend to highlight the center of each digit. SmoothGrad and vanilla gradients are much noisier and highlight edges and many random background pixels. 
For the Chest X-ray dataset, we see that our method and FullGrad perfectly highlight the spurious signal. GradCAM again suffers from a centrality bias, and cannot highlight pixels on the edge. SmoothGrad and gradients appear mostly random to the human eye.

We also consider the visual quality of our attributions compared with the baselines (examples are shown in \ref{fig:qual_examples}). We find that our method, FullGrad, and GradCAM appear the most interpretable, as opposed to SmoothGrad and vanilla gradients, because they consider features at the superpixel level rather than individual pixels. We also see that GradCam and FullGrad seem relatively class invariant, and tend to focus on the center of most images, rather than the discriminative features for each class, providing for less useful insights into the models and datasets.

\subsection{Robustness to Adversarial Manipulation of Explanations}\label{sec:manip}
In this section, we highlight our method's robustness to adversarial explanation manipulation. To this end, we follow the manipulation proposed in \cite{heo2019fooling}, which adversarially manipulates gradient-based explanations. This is achieved by adding an extra term to the training objective that encourages input gradients to equal an arbitrary uninformative mask of pixels in the top left corner of each image. Note that model accuracy on the classification task is the same as training with only cross-entropy loss. 

We repeat experiments for all evaluation metrics on these manipulated models, with pixel-perturbation shown below \ref{fig:adv_perturb}, and IOU, model faithfulness, and model robustness in the appendix. We see that gradient-based methods perform significantly worse on manipulated models; however, our method remains relatively invariant. We also note while the models are only manipulated to have arbitrary input gradients, SmoothGrad and GradCAM are also heavily affected such that their attributions are entirely uninterpretable as well, as shown below. 


\begin{figure}[h!]
     \centering
     \begin{subfigure}[b]{0.4\textwidth}
         \centering
         \includegraphics[width=\textwidth]{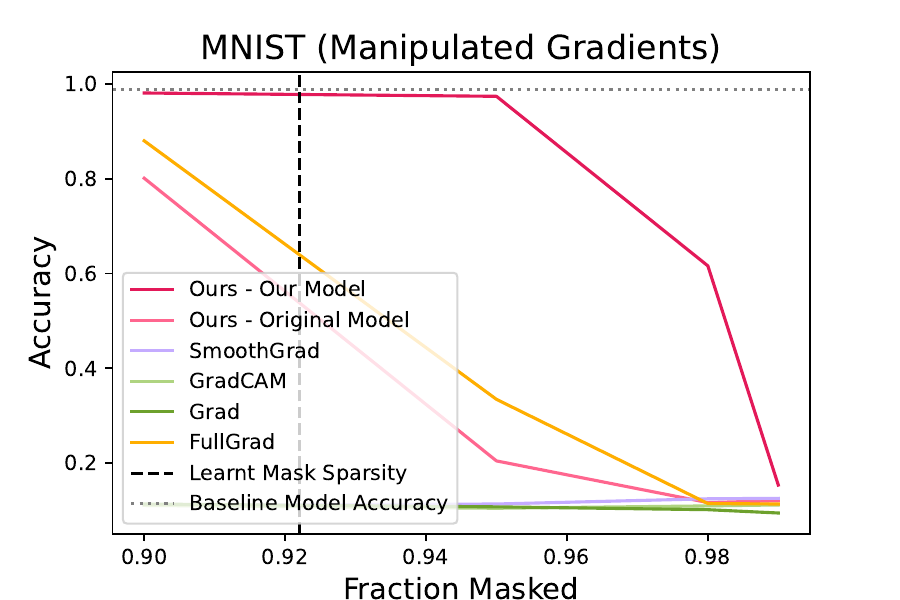}
         \label{fig:mnist_adv_perturb}
     \end{subfigure}
     \begin{subfigure}[b]{0.5\textwidth}
         \centering
         \includegraphics[width=\textwidth]{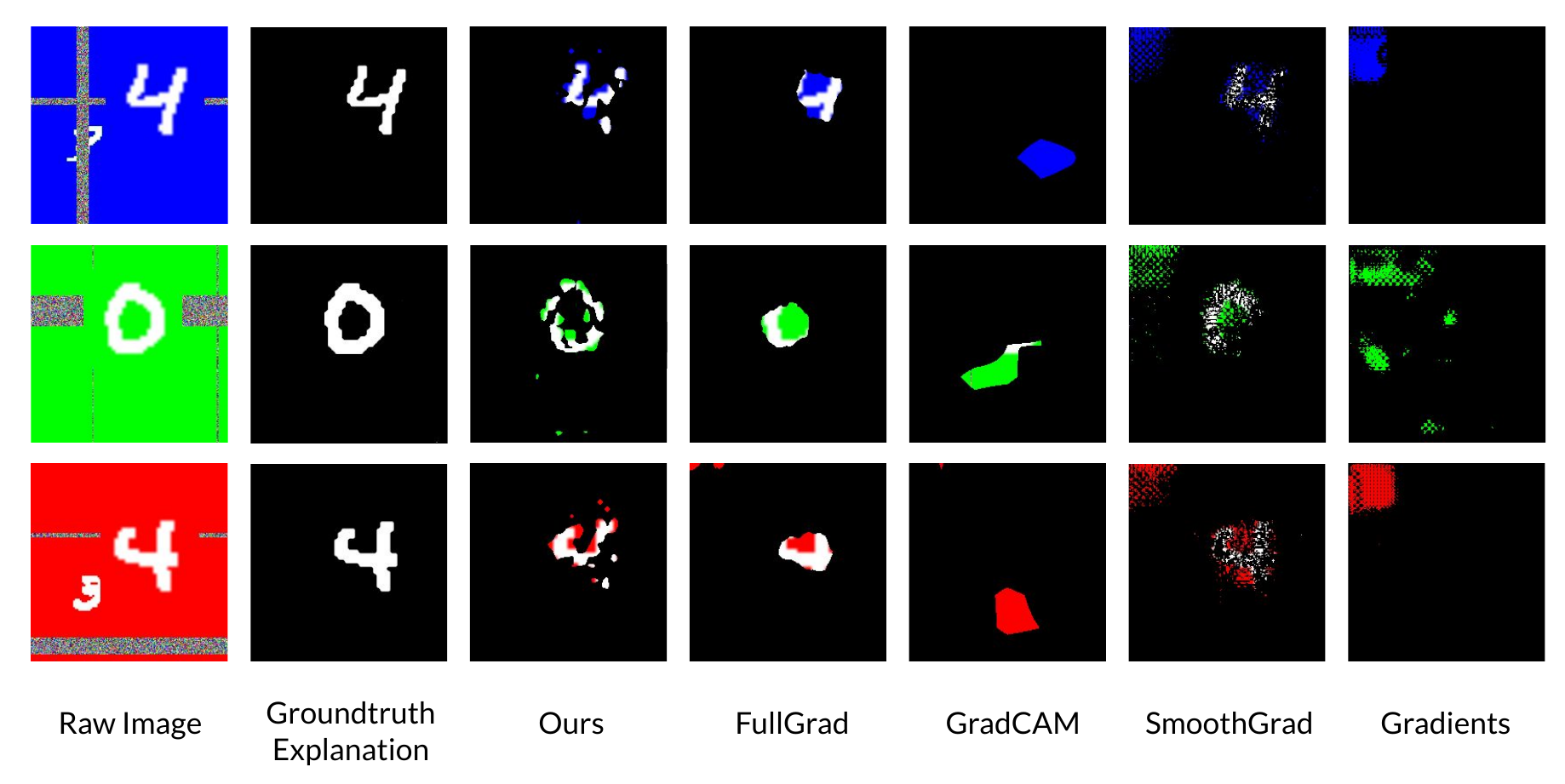}
         \label{fig:mnist_manip}
     \end{subfigure}
     
        \caption{Pixel perturbation test and example images for models trained with gradient manipulation on Hard MNIST. Results for other datasets are in the appendix. Refer to Section \ref{sec:manip} for details.  
        }
        \label{fig:adv_perturb}
\end{figure}

\subsection{Attribution Sensitivity to Hyperparamaters}\label{downscaling}
We conduct an ablation study on the choice of how much to downscale the mask by. The less we downscale by, the more fine-grained the mask is, allowing for optimization over a larger set of masks. However, the more we downscale by, the visually cohesive or ``interpretable'' to humans the masks are. We evaluate the trade-off between these two via pixel perturbation tests over multiple downscaling factors and with qualitative evaluations of the final masks in \ref{fig:ups}. We see that a downscaling factor of 8 performs the best on pixel perturbation tests. Increased factors of downscaling impose a greater locality constraint that results in informative pixels being masked, as shown in the visualization.

\begin{figure}[h!]
     \centering
     \begin{subfigure}[b]{0.31\textwidth}
         \centering
         \includegraphics[width=\textwidth]{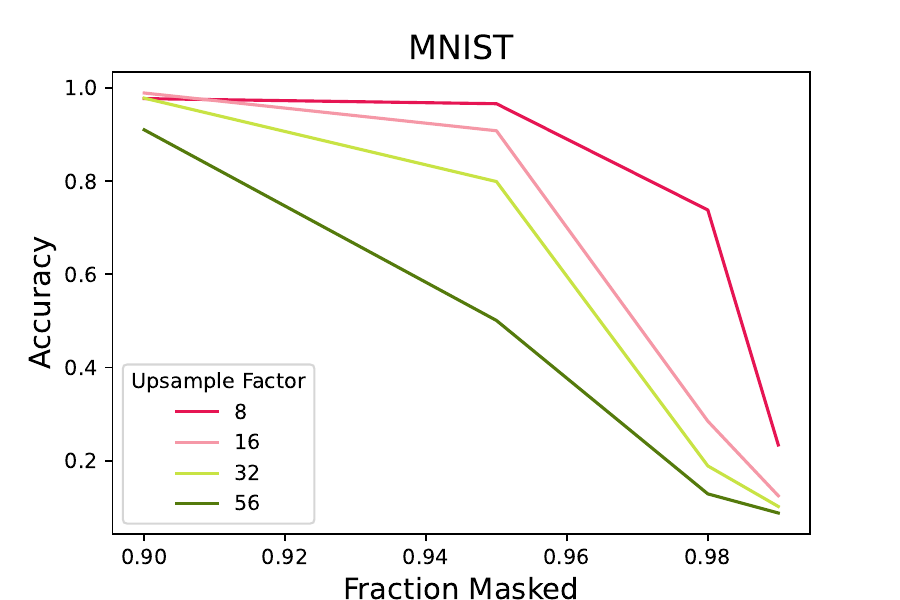}
         \label{fig:mnist_ups}
     \end{subfigure}
     \begin{subfigure}[b]{0.31\textwidth}
         \centering
         \includegraphics[width=\textwidth]{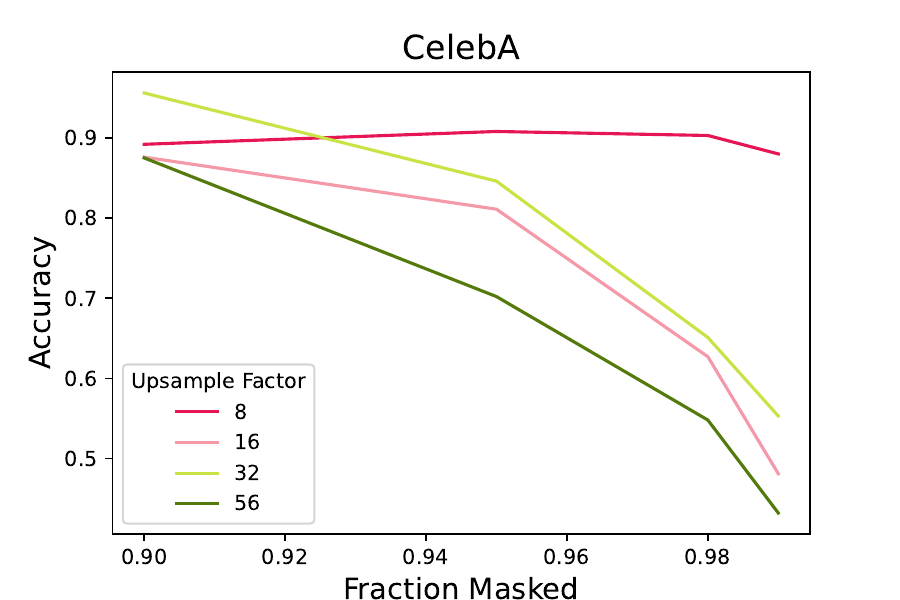}
         \label{fig:celeba_ups}
     \end{subfigure}
     \begin{subfigure}[b]{0.33\textwidth}
         \centering
         \includegraphics[width=\textwidth]{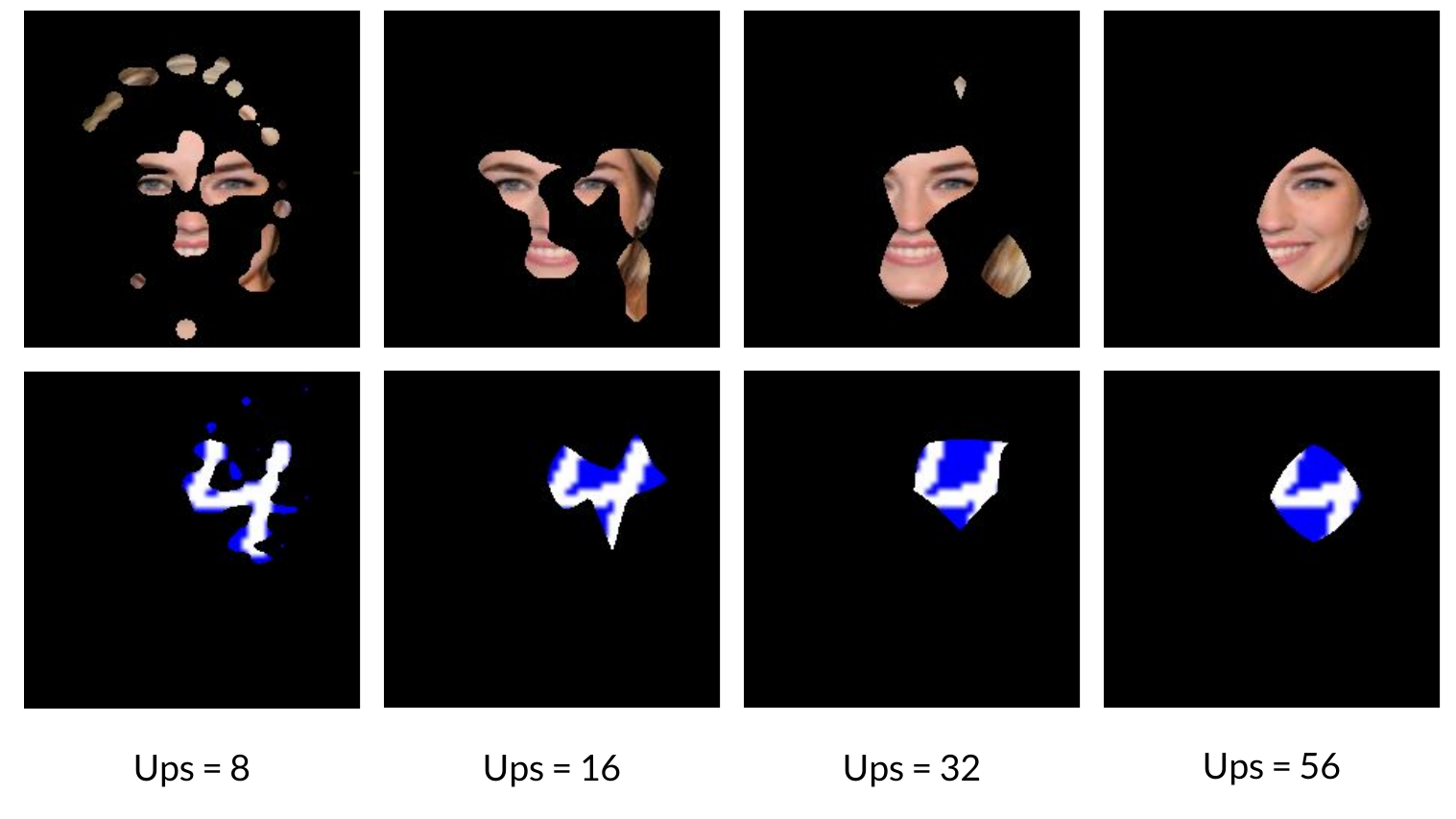}
         \label{fig:ups_ex}
     \end{subfigure}

        \caption{Ablation study of pixel perturbation test with varying levels of mask downscaling for MNIST and CelebA. Example images for each factor are shown on the right. }
        \label{fig:ups}
\end{figure}

\section{Discussion}
In this paper, we seek to build machine learning models such that their feature attributions remain discriminative. In particular, we propose \name, a method that adapts black-box models into those that are robust to distractor erasure. We empirically evaluate \name~ and find that the resulting models are highly faithful to the original and produce interpretable attributions that closely match the ground truth ones. Limitations of \name~include requiring full access to the training dataset and the baseline model. Furthermore, while it produces verifiable feature attributions that tell us how important each feature is to the model's prediction, it does not tell us what the relationship between important features and the output/label is, as is true with all feature attributions. 

\begin{ack}
This work is supported in part by the NSF awards IIS-2008461, IIS-2040989, IIS-2238714, Kempner Institute Graduate Fellowship, and research awards from Google, JP Morgan, Amazon, Harvard Data Science Initiative, and the Digital, Data, and Design (D$^3$) Institute at Harvard. The views expressed here are those of the authors and do not reflect the official policy or position of the funding agencies.
\end{ack}

\newpage
\bibliographystyle{unsrtnat}
\bibliography{refs}

\newpage
\section*{Supplementary Material}

\section{Proofs}

\begin{theorem}
    $\Q$-feature attributions applied to $\Q$-robust models ($\mathcal{F}_v(\Q)$) \underline{recover the signal-distractor decomposition} for Bayes-optimal predictor $f_v^* \in \mathcal{F}_v(\Q)$.
\end{theorem}

\begin{proof}
Consider QFA with $\epsilon = 0$. Let $f^*$ be the Bayes-optimal model, which implies that $f_y^*(\X) = p_{bayes}(y \mid \X) = p(y \mid \X) = \frac{p(\X \mid y)}{\sum_i p(\X \mid y=i)} $, i.e., the model estimates the correct conditional probabilities from the data given the class-conditional generative probabilities. Note that this is only defined for inputs $\X \in \mathcal{X}$ in the support of the data distribution and not outside, i.e., $p_{bayes}(y \mid \X) = p(y \mid \X)$ only when $\X \in \mathcal{X}$.

Let us define $\X \odot (1 - \M) = \X_\text{distractor} \sim \mathcal{X}_\text{distractor}$. This is the distribution of distractor images, which (recall) are independent of the label $y$. Using this, we consider an \textbf{idealized version of QFA}, with $\Q_{ideal} = \mathcal{X}_\text{distractor}$. From Definition \ref{defn:QFA}, this results in generation of simplified inputs $\X_s = \X \odot \M + q \odot (1 - \M)$. For $q \sim \Q_{ideal} = \mathcal{X}_\text{distractor}$, we observe that $\X_s \in \mathcal{X}$, the data distribution. Recall the Bayes optimal classifier is defined across the data distribution $\mathcal{X}$, and applying QFA results in the sparsest $\M$ such that $p(y \mid \X) =  p_{bayes}(y \mid \X) = p_{bayes}(y \mid \X_s) = p(y \mid \X \odot \M + q \odot (1 - \M)) = p(y \mid \X \odot \M)$. The last equality holds because $q \odot (1-\M)$ is independent of $y$. This corresponds to the definition of the signal-distractor, and thus it implies that \textbf{QFA with $\Q_{ideal}$ recovers the mask defined by the signal-distractor decomposition}. 

For any other value of $\Q \neq \Q_{ideal}$, we first consider the $\Q$-robust Bayes optimal predictor $p_{bayes}^{\Q}(y \mid \X)$. This has the property that $p_{bayes}^{\Q}(y \mid \X) = p_{bayes}(y \mid \X)$ for $\X \in \mathcal{X}$. We now compare mask $\M_{\Q}$ derived from applying QFA on $p_{bayes}^{\Q}$ and mask $\M_{\Q_{ideal}}$ from applying QFA with $\Q_{ideal}$ on $p_{bayes}^{\Q}$. From the previous paragraph, we know that $\M_{\Q_{ideal}} = \M_{ideal}$ is the ideal sparsest mask. However from Definition \ref{defn:QVIM}, $\M_{\Q}$ is the sparsest mask. Thus it is the case that $\M_{\Q} = \M_{ideal}$, proving our overall result.
\end{proof}

We now present proof for an additional statement not described in the main text, where we connect QFA to other commonly used feature attributions via the local function approximation framework \cite{han2022explanation} as follows.

\begin{theorem}
 QFA is an instance of the local function approximation framework (LFA), with (1) random binary perturbations, and (2) an interpretable model class consisting of linear models with binary weights
\end{theorem}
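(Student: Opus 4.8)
The plan is to exhibit QFA as one specific choice of the three ingredients that parameterize the LFA framework of \citet{han2022explanation}: (i) a perturbation (neighborhood) distribution around $\X$, (ii) an interpretable model class $G$, and (iii) a loss measuring local fidelity. Recall that LFA defines an explanation as the interpretable model $g^\star = \arg\min_{g \in G} \E_z\, \ell(f(h_\X(z)), g(z))$ obtained by approximating $f$ in a neighborhood of $\X$, where $h_\X$ maps a simplified (binary) representation $z$ back to input space. I would first set up a dictionary between the two formalisms and then check that the LFA minimizer coincides term-for-term with the QFA mask of Definition \ref{defn:QFA}.

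For ingredient (i), I would take the simplified domain to be binary vectors and the reconstruction map to be exactly the feature-replacement operator $h_\X(z) = \X_s(z, q) = z \odot \X + (1-z) \odot q$ with $q \sim \Q$. The binary structure comes from keeping or replacing each coordinate, and the randomness from drawing the replacement $q \sim \Q$; together these yield the \emph{random binary perturbations}, and the expectation $\E_{q \sim \Q}$ appearing in the QFA constraint is precisely the LFA expectation over this perturbation distribution. For ingredient (ii), I would take $G$ to be linear models whose weight vector $w$ is constrained to $\{0,1\}^d$; the weights of such a binary linear model double as the attribution mask, so that choosing $g \in G$ is the same as choosing a mask $\M' = w$.

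For ingredient (iii), I would take the fidelity loss to be the $\ell_1$ output deviation $\|f(\X_s(\M', q)) - f(\X)\|_1$ together with a sparsity complexity term $\|w\|_0 = \|\M'\|_0$ that penalizes dense binary linear models. With this dictionary in place, the LFA objective of selecting the sparsest binary-weight linear model that keeps the expected output deviation within $\epsilon$ is identical to the constrained program defining $\M(f, \X, \Q)$, so the minimizing weight vector $w^\star$ is exactly the QFA mask. I would also note that the constrained-versus-penalized discrepancy between the two formulations is immaterial, since the $\ell_0$-constrained form and its Lagrangian relaxation (already invoked when relaxing QFA in Equation \ref{l_explain}) share the same minimizer for an appropriate multiplier.

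The step I expect to be the main obstacle is reconciling the two roles played by the binary weight vector. In the standard LFA instantiations (LIME, SHAP), the interpretable model $g$ produces a real-valued prediction compared against $f$ in output space, whereas here the binary weights $w$ simultaneously (a) parameterize the interpretable model and (b) define the perturbation mask entering $h_\X$. The crux of the argument is to show that these two roles are consistent — that measuring fidelity as the agreement between $f$ evaluated on the masked input $\X_s(w, q)$ and $f$ evaluated on $\X$ is a legitimate LFA loss for a binary-weight linear model — and to confirm that this identification respects the formal requirements of \citet{han2022explanation} rather than merely resembling them. Once this identification is pinned down, the remaining verification is routine bookkeeping.
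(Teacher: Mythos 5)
You have correctly identified the two easy ingredients (random binary perturbations and binary-weight linear models), but the step you yourself flag as ``the main obstacle'' is exactly the step your dictionary does not supply, and it is where your construction diverges from the paper's. In your version the weight vector $w$ plays both roles at once: it parameterizes $g$ and it enters the perturbation map $h_\X(w) = \X_s(w,q)$, and the ``loss'' $\|f(\X_s(w,q)) - f(\X)\|_1$ never actually compares a prediction $g(z)$ of the interpretable model against a black-box output $f(h_\X(z))$ over a neighborhood. That is not the LFA template $\arg\min_{g}\E_{z}\,\ell\bigl(f(h_\X(z)), g(z)\bigr)$ — it is simply the QFA objective restated, so the ``identification'' does not establish membership in the framework; it presupposes it.

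The paper resolves this with a construction you are missing: it does not take the original $f$ as the black box to be approximated. Instead it defines a derived binary-valued black box $f_b(\X;\xi) = \mathbbm{1}\bigl(\E_q \| f(\X_s(\xi,q)) - f(\X) \|_2 \leq \epsilon\bigr)$, i.e., the indicator of whether the mask $\xi$ satisfies the QFA constraint. The perturbation variable $\xi$ is then drawn \emph{independently} of $g$ from a random binary distribution, and the interpretable model $g^\top\xi$ is fit to the $\{0,1\}$-valued target $f_b(\X;\xi)$ under squared loss plus an $\ell_0$ penalty. Now $g$ and the perturbation have the separate roles LFA requires, and the minimizer is the QFA mask $\M^*$ because setting $g=\M^*$ zeroes the fit term while the sparsity penalty enforces minimality. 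Without wrapping the constraint in this indicator and treating the indicator as the function being locally approximated, your argument cannot close the gap you identified, so the proof as proposed is incomplete.
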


\begin{proof}
Assume a black box model given by $f_b(\X; \M) = \mathbbm{1}\left(\E_q \| f(\X_s(\M,q)) - f(\X) \|_2 \leq \epsilon\right)$ ,loss function $\ell(f,g,x,\xi) = (f(x; \xi) - g(\xi))^2$, neighborhood perturbation $Z = \text{Uniform}(0,1)^d$, and an interpretable model family $\mathcal{G}$ being the class of binary linear models. 

For these choices, it is easy to see that 

\begin{align*}
    &\arg\min_{g \in \mathcal{G}} \ell(f,g,\X,\xi) \\
    =& \arg\min_{g \in \mathcal{G}} \E_{\xi} \left(f_b(\X; \xi) - g^\top\xi\right)^2 + \lambda \| g \|_0 \\
    =& \arg\min_{g \in \mathcal{G}} \E_\xi \left(\mathbbm{1}\left(\E_q \| f(\X_s(\xi,q)) - f(\X) \|_2 \leq \epsilon\right) - g^\top\xi\right)^2 + \lambda \| g \|_0\\
\end{align*}

This above objective is minimized when $g = \M^*$, i.e., the ideal $\epsilon\Q$-FA mask, because this sets the first term to be zero by definition, and the second sparsity term ensures the minimality of the mask.
\end{proof}

\section{Additional Results}

\paragraph{Model Verifiability.} We further test the verifiability of our models by evaluating how the model's performance changes when performing the pixel perturbation test on groundtruth attributions. This enables us to disentangle the verifiability of the model from the correctness of the attributions, as we know that our attributions are correct. We use the same groundtruth attributions as in \ref{correctness}. 
We report the $\ell_1$ norm between predictions made on the original samples and predictions made on the masked samples. We compare our verifiable models to the baseline models they approximate, as well as models trained with input dropout, which \cite{jethani2021have} proposes as their verifiable model class. Training with input dropout is equivalent to training $f_v$ with random masks and cross-entropy loss rather than optimized masks and $f_b$ prediction matching. 
Results are shown in \ref{model-verifiability}, where we see that our model performs similarly for masked and normal samples, whereas the other models do not. 

\begin{table}[h!]
\centering
\begin{tblr}{
      hline{1-2,5} = {-}{},
    }
     & {Hard \\MNIST} & {Chest\\ X-ray}\\
    \name~ Model ($f_v)$ & \textbf{0.027} & \textbf{0.0009}\\
    Original Model ($f_b)$ & 0.107 & 0.0032\\
    $f_b$ + Input Dropout & 0.167 & 0.0536\\
\end{tblr}
\caption{Model Verifiability}
\label{model-verifiability}
\end{table}

\paragraph{Robustness to Explanation Attacks.}
We report additional results on Chest X-ray and CelebA for the pixel perturbation tests, IOU tests, and model faithfulness tests for baseline models trained with manipulated gradients, as outlined in \ref{sec:manip}. We see that \name models are still  highly faithful and produce correct explanations even when derived from models adversarially trained to have manipulated explanations.  
\begin{table}[h!]
\centering
\caption{Faithfulness of \name Model for Manipulated Models}
\label{model-faithfulness-manip}
\begin{tblr}{
  cell{1}{2} = {c},
  cell{1}{3} = {c},
  cell{1}{4} = {c},
  hline{1-2,4} = {-}{},
}
 & {Hard \\ MNIST} & {Chest \\ X-ray} & {CelebA}\\
{Accuracy on \\Original Data\\$f_v(D_d) = f_b(D_d)$} & 0.990 & 1.00 & 0.970\\
{Accuracy on \\Simplified Data~\\$f_v(D_s) = f_b(D_d)$} & 0.980 & 1.00 & 0.946
\end{tblr}
\end{table}

\begin{table}[h!]
\centering
\caption{Intersection Over Union Results for Manipulated Models}
\label{iou-table-manip}
\begin{tblr}{
  cell{1}{3} = {c},
  cell{1}{4} = {c},
  cell{2}{1} = {r=5}{},
  hline{1-2} = {-}{},
  hline{7} = {2-4}{},
}
 &  & {MNIST \\(manipulated)} & {Chest X-ray\\(manipulated)}\\
\begin{sideways}Method\end{sideways} & \name (Ours) & \textbf{0.454 $\pm0.08$} & \textbf{0.631$\pm 0.12$}\\
 & SmoothGrad & 0.158 $\pm0.07$ & 0.000 $\pm0.00$\\
 & GradCAM & 0.040$\pm0.06$ & 0.000 $\pm0.00$\\
 & Input Grad & 0.002$\pm0.01$ & 0.000 $\pm0.00$\\
 & FullGrad & 0.333 $\pm0.12$ & 0.004 $\pm0.04$
\end{tblr}
\end{table}

\begin{figure}[h!]
     \centering
     \begin{subfigure}[b]{0.4\textwidth}
         \centering
         \includegraphics[width=\textwidth]{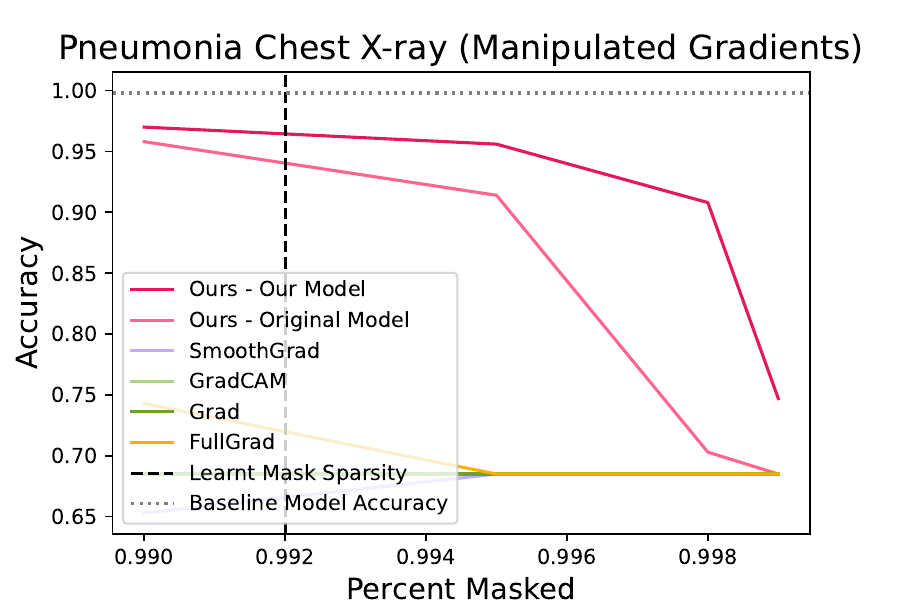}
         \label{fig:xray_adv_perturb}
     \end{subfigure}
     \begin{subfigure}[b]{0.4\textwidth}
         \centering
         \includegraphics[width=\textwidth]{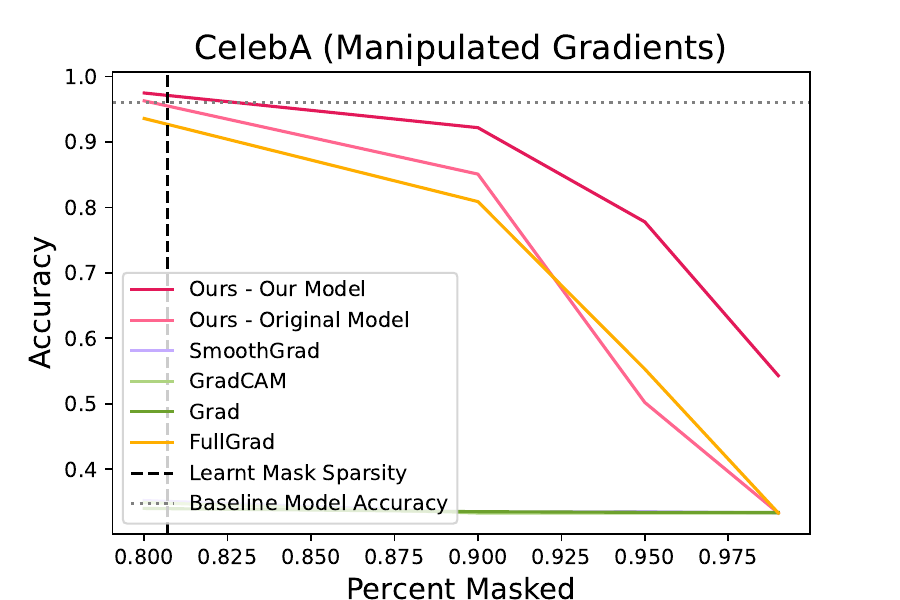}
         \label{fig:celeba_adv_perturb}
     \end{subfigure}
        \caption{Pixel perturbation tests for models trained with gradient manipulation for Chest X-ray (middle) and CelebA (right). 
        }
        \label{fig:adv_perturb}
\end{figure}

\paragraph{Signal vs Distractor Masking.}
For our CelebA experiments, there may have been unintended spurious correlations that we did not foresee and that our method did not recover, leading to a vacuous decomposition. To further support the results in Figure 3, we also perform experiments on masking the signal instead of the distractor, by masking out “important” pixels as opposed to the unimportant ones. The results for this experiment are shown in \ref{tab:signal_masking}.

\begin{table}[h]
\centering
\caption{Signal Masking experiments}
\label{tab:signal_masking}
\begin{tblr}{
  cells = {c},
  hline{1-2,6} = {-}{},
}
Fraction Masked & Masked Distractor Acc & Masked Signal Acc & Random Mask Acc\\
0.2 & 0.971 & 0.772 & 0.965\\
0.4 & 0.967 & 0.569 & 0.935\\
0.6 & 0.967 & 0.392 & 0.826\\
0.8 & 0.960 & 0.339 & 0.587
\end{tblr}
\end{table}

These results indicate that for the \name models, there do not exist pixels outside the shown signal regions in Figure 2 that contain information about the label. These results show that the signal-distractor decomposition for CelebA is very likely non-vacuous. 

\paragraph{Finetuning on less data.}
We recognize that a limitation of this work is that it requires further training of the given models on their training datasets. To address this issue, we briefly explore whether \name can be finetuned on a small subset of the data distribution (without labels) as follows for MNIST. The original model is trained on 8835 samples from the train split. \name is finetuned on 1500 samples from a separate unlabeled validation split. We perform pixel perturbation on the remaining 8500 test samples. We compare this to \name models finetuned on the full train set and note a minimal change in performance in \ref{tab:small_data_ft}.

\begin{table}[h]
\centering
\caption{Small Scale Finetuning}
\label{tab:small_data_ft}
\begin{tblr}{
  cells = {c},
  hline{1-2,6} = {-}{},
}
Percent Masked & \name (Finetuned on Train Split) & \name (Finetuned on Val Split)\\
90 & 0.977 & 0.962\\
95 & 0.966 & 0.937\\
98 & 0.738 & 0.691\\
99 & 0.234 & 0.288
\end{tblr}
\end{table}
Overall, this shows that there exist ways to decrease the computational complexity of our procedure when applied to larger datasets, which we haven’t fully investigated yet.

\paragraph{Effect on standard robustness.}
 We conduct additional robustness experiments for Gaussian and Bernoulli noise with varying standard deviations and probabilities. Results are shown in \ref{tab:gaus_rob} and \ref{tab:bern_rob}. We find that \name models are generally more robust than regular models. Note that neither \name models nor the original models are explicitly trained or tuned for robustness on these distributions.

\begin{table}[h]
\centering
\caption{Robustness experiments (Gaussian noise)}
\label{tab:gaus_rob}
\begin{tblr}{
  cell{1}{2} = {c=2}{c},
  cell{1}{4} = {c=2}{c},
  hline{1-3,6} = {-}{},
}
 & Hard MNIST &  & CelebA & \\
STD & Ours & Original & Ours & Original\\
0.2 & 0.974 & 0.976 & 0.959 & 0.939\\
0.4 & 0.961 & 0.892 & 0.675 & 0.537\\
1.0 & 0.61 & 0.176 & 0.335 & 0.438
\end{tblr}
\end{table}

\begin{table}[h]
\centering
\caption{Robustness experiments (Bernoulli noise)}
\label{tab:bern_rob}
\begin{tblr}{
  cell{1}{2} = {c=2}{c},
  cell{1}{4} = {c=2}{c},
  hline{1-3,7} = {-}{},
}
 & Hard MNIST &  & CelebA & \\
p & Ours & Original & Ours & Original\\
0.9 & 0.182 & 0.155 & 0.348 & 0.356\\
0.75 & 0.718 & 0.3 & 0.393 & 0.364\\
0.5 & 0.922 & 0.748 & 0.528 & 0.453\\
0.1 & 0.962 & 0.969 & 0.867 & 0.807
\end{tblr}
\end{table}

\paragraph{Additional Baselines: SHAP and BCosNets.}\label{shapbcos}
We consider two additional explanation baselines: SHAP \cite{lundberg2017shap} and BCosNets \cite{bohle2022b}. Results for pixel perturbation tests and visualizations are shown in \ref{fig:shap}. Results for IOU tests are shown in \ref{tab:shap-iou}.

We note that in general SHAP does not often perform as well as most gradient-based methods for image data. We find that it performs similarly to random attributions.

We add B-CosNets as an inherently-interpretable model baseline. We find that B-CosNets performs comparably to our method for the IOU test on Hard MNIST. We were unable to train it to convergence on the Chest X-ray dataset. We also find that visualizations created by B-CosNets align with our expectations and appear visually interpretable. However, our method still significantly outperforms B-CosNets on the pixel perturbation test, showing that B-CosNets are not robust to perturbations of the distractor (i.e. are not verifiable). We also note that our method can be applied to a trained black-box model of any architecture and training procedure, whereas B-CosNets, like all inherently interpretable models, cannot.

\begin{figure}[h!]
     \centering
     \begin{subfigure}[b]{0.4\textwidth}
         \centering
         \includegraphics[width=\textwidth]{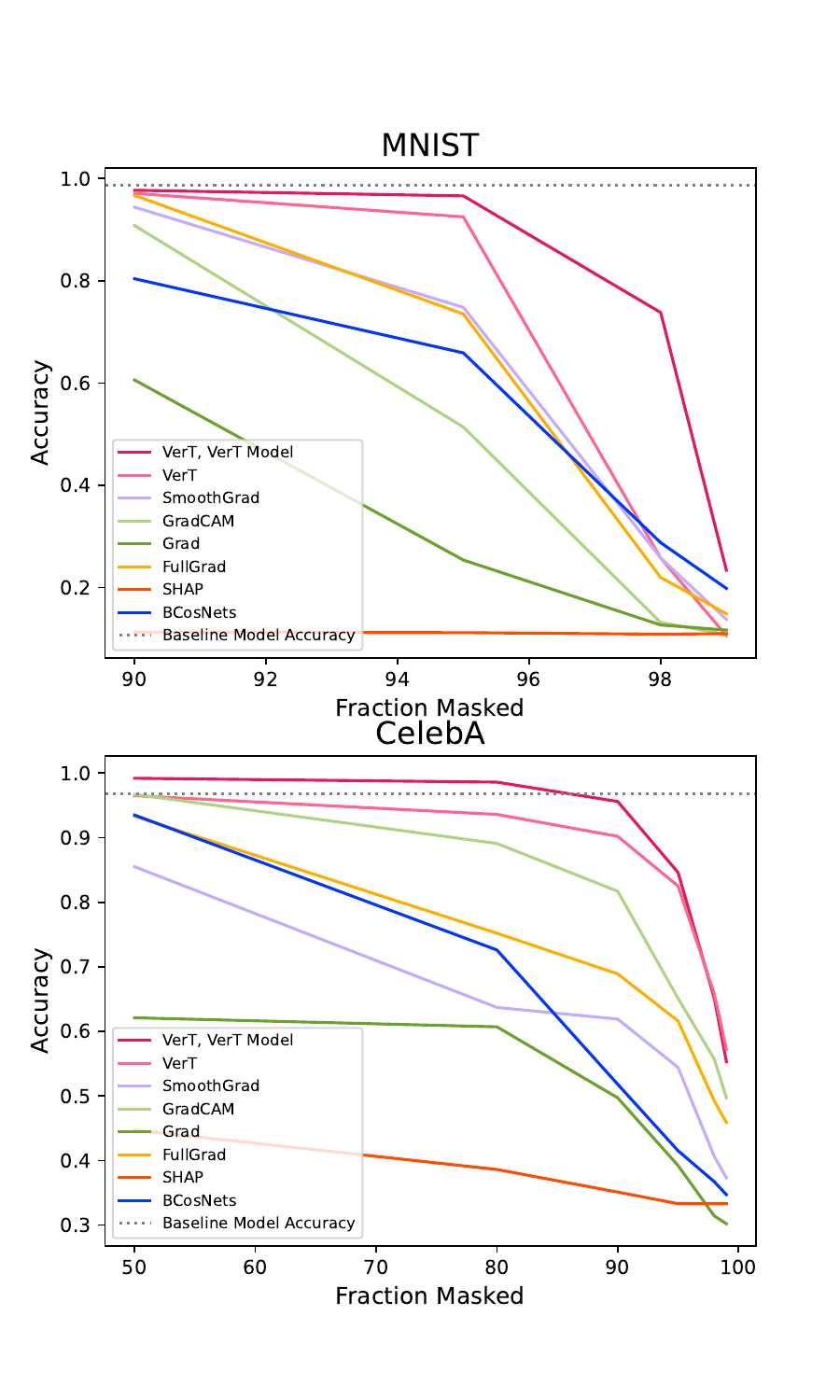}
         \label{fig:mnist_shap}
     \end{subfigure}
     \begin{subfigure}[b]{0.59\textwidth}
         \centering
         \includegraphics[width=\textwidth]{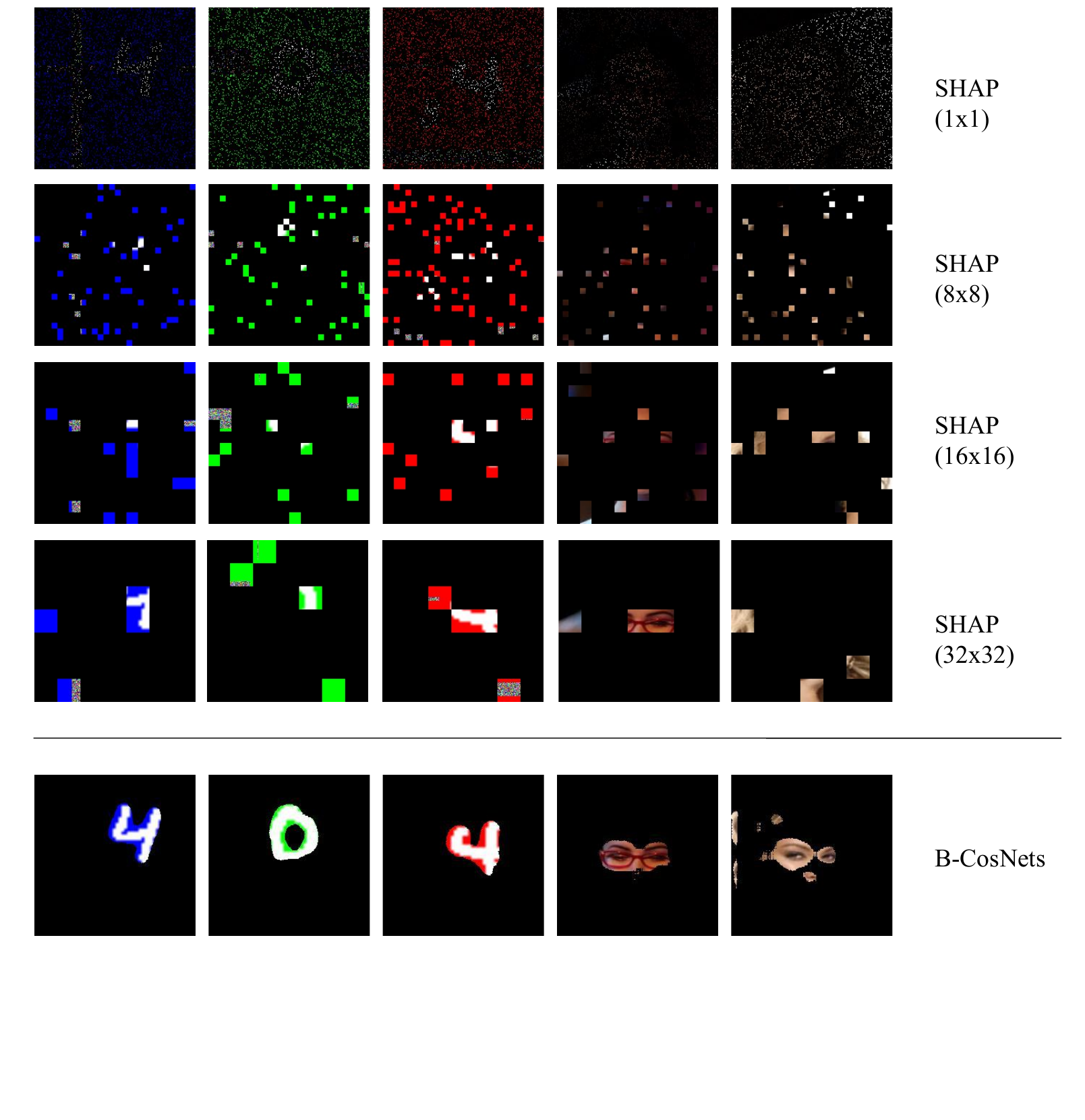}
         \label{fig:celeba_shap}
     \end{subfigure}

        \caption{\textbf{Left:} Pixel perturbation tests with added baselines of SHAP (orange) and B-CosNets (blue). \textbf{Right:} Visualization of SHAP and B-CosNets explanations on Hard MNIST and CelebA. Results for SHAP are shown at varying feature sizes. We note that SHAP explanations appear relatively random at all levels of granularity.}
        \label{fig:shap}
\end{figure}

\begin{table}[h!]
\centering
\caption{\textbf{SHAP and B-Cos IOU.} Note that B-CosNets did not converge for the Chest X-ray dataset.}
\label{tab:shap-iou}
\begin{tblr}{
  hline{1-2,5} = {-}{},
}
 & Hard MNIST & Chest X-ray\\
\name & {0.461 $\pm$ 0.08 } & {0.821 $\pm$ 0.05}\\
B-CosNets & {0.465 $\pm$ 0.07} & --\\
SHAP & {0.036 $\pm$ 0.02} & {0.016 $\pm$ 0.05}
\end{tblr}
\end{table}

\paragraph{Effect of Choice of Q.}
We perform an ablation to test the effect that the choice of Q has empirically. We consider various parameterizations of the normal distribution used for Q, and find that results are relatively consistent across the different choices of Q, as shown in \ref{fig:q_vis}.

\begin{figure}[h!]
     \centering
         \centering
         \includegraphics[width=0.5\textwidth]{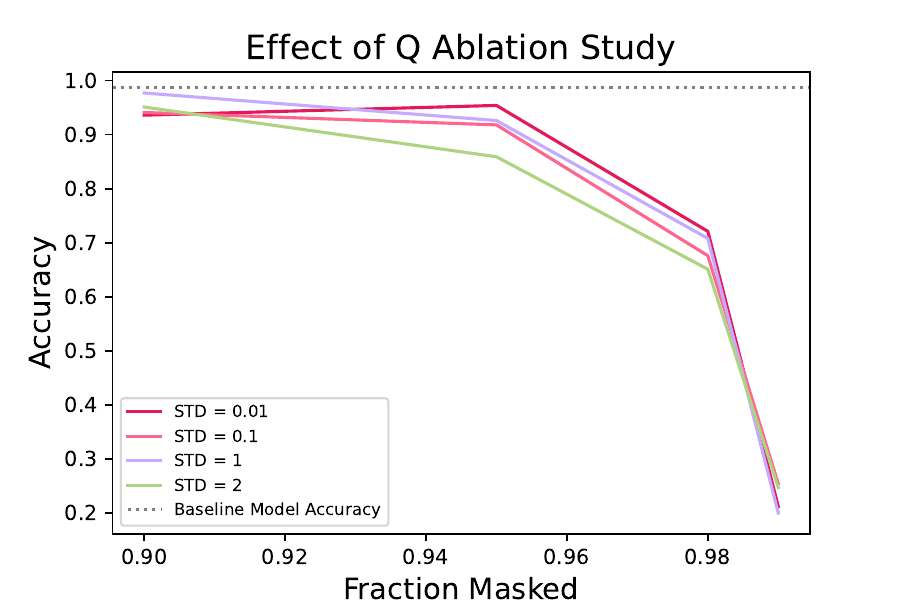}
         \caption{Pixel perturbation tests on Hard MNIST for \name~ models trained with various $Q$s, $Q \sim \mathcal{N}(\mu, \lambda \sigma)$, for $\lambda$ in $\{0.01, 0.1, 1, 2\}$.}
         \label{fig:q_vis}

\end{figure}

\section{Additional Implementation and Computation Details}
Models were trained on the original train/test split given by \url{https://github.com/jayaneetha/colorized-MNIST} for Hard MNIST and \cite{kermany2018labeled} for the Chest X-ray dataset and with a random 80/20 split for CelebA. Baseline models were trained with Adam for 10 epochs with learning rate $1e-4$ and batch size 256. All hyperparameters are included in the code for this paper. The model distillation and data distillation terms are weighted with $\lambda_1 = \lambda_2 = 1$. We learn our masks with SGD (lr=$300$, batch size = $128$) and our robust models with Adam (lr=$1e-4$, batch size = $128$). We ran all experiments on a single A100 80 GB GPU with 32 GB memory. 

\section{Broader Impact} 
Our method, \name, aims to transform black-box models into distractor robust, interpretable models, which produce easily verifiable feature attributions. As such, if applied correctly, it can help users and stakeholders of machine learning models better understand a model's predictions and behavior by isolating the features necessary for each prediction, which can help highlight biases, overfitting, mistakes, and more. It can also help to identify spurious correlations that naturally exist in datasets and are leveraged by models through identification of the signal-distractor decomposition. 
However, even if \name does not identify a spurious correlation, that does not mean that further dataset cleaning, processing, or curation is not needed, as a different model may still learn a spurious correlation that was not leveraged by the original model. 
Furthermore, feature attributions often do not constitute a \textit{complete} explanation of a model. For instance, while an attribution tells us \textit{what} was important, it does not tell us \textit{how} it was important or how the model uses that feature. In all high stakes applications, it is still imperative that stakeholders think critically about each prediction and explanation, rather than blindly trusting either.


\end{document}